\newtheorem{theorem}{Theorem}
\newtheorem{definition}{Definition}
\newtheorem{corollary}{Corollary}
\title{ReCePS: Reward Certification for Policy Smoothed Reinforcement Learning}
\author{
    Ronghui Mu\textsuperscript{\rm 1},
    Leandro Soriano Marcolino \textsuperscript{\rm 2},
    Tianle Zhang \textsuperscript{\rm 1},
    Yanghao Zhang \textsuperscript{\rm 1},\\
    Xiaowei Huang \textsuperscript{\rm 1},
   Wenjie Ruan \textsuperscript{\rm 1}\footnote{Corresponding Author}
}
\begin{document}

\maketitle

\begin{abstract}

Reinforcement Learning (RL) has achieved remarkable success in safety-critical areas, but it can be weakened by adversarial attacks. Recent studies have introduced "smoothed policies" to enhance its robustness. Yet, it is still challenging to establish a provable guarantee to certify the bound of its total reward. Prior methods relied primarily on computing bounds using Lipschitz continuity or calculating the probability of cumulative reward above specific thresholds. However, these techniques are only suited for continuous perturbations on the RL agent's observations and are restricted to perturbations bounded by the $l_2$-norm. To address these limitations, this paper proposes a general {\em black-box} certification method, called {\bf ReCePS}, which is capable of directly certifying the cumulative reward of the smoothed policy under various $l_p$-norm bounded perturbations. Furthermore, we extend our methodology to certify perturbations on action spaces. Our approach leverages $f$-divergence to measure the distinction between the original distribution and the perturbed distribution, subsequently determining the certification bound by solving a convex optimisation problem. We provide a comprehensive theoretical analysis and run sufficient experiments in multiple environments. Our results show that our method not only improves the certified lower bound of the mean cumulative reward but also demonstrates better efficiency than state-of-the-art methods.

\end{abstract}

\section{Introduction}
The utilisation of neural networks in Reinforcement Learning (RL) has achieved remarkable success in 
 safety-critical domains, such as controlling robots and autonomous driving \cite{sallab2017deep,pan2017virtual,johannink2019residual}. Nevertheless, 
 recent research has revealed their vulnerability to the presence of adversarial perturbations \cite{SzegedyZSBEGF13,MadryMSTV17,GoodfellowSS14,mu2021sparse,jin2022enhancing,9186644}. For example, numerous studies 
 have demonstrated that even well-trained RL policies can suffer significant failures when 
 directly perturbing the observations of the RL agent \cite{PattanaikTLBC18,mu2023certified} or in action space \cite{LinHLSLS17}. 
In this regard, it is vital to analyse their robustness before their deployment in safety-critical systems \cite{christiano2016transfer,cheng2019end}.

Various empirical defences have been proposed to defend adversarial attacks in RL systems \cite{manikandan2011measures,pattanaik2017robust}, while it has been demonstrated that even robust models can still be compromised by more advanced attack methods \cite{russo2019optimal}. Hence, there is a need for computing provable guarantees for the trained policy to disrupt the repeated game between attacks and defence, which is referred to as {\em robustness certification}. The majority of studies aiming to provide robustness certification primarily focus on classification tasks, while the certification for RL is seldom touched. 

Compared to classification tasks, certifying RL engages more challenges due to its sequential decision-making nature. To address this obstacle, recent efforts have focused on certifying a "smoothed policy" based randomised smoothing strategies. What sets these approaches apart is that they do not require access to the internal architecture and parameters of the DNNs, which is referred to as \textit{black-box certification}. For instance, \citet{wu2021crop} employed Lipschitz continuity to approximate the final output, but this approach resulted in a loose bound. Instead of directly certifying the lower bound, \cite{kumar2021policy} proposed certifying the reward by breaking it down into several thresholds and then assessing the probability of the cumulative reward staying above a specific threshold. However, this method proved too restrictive (see discussion in Section \ref{sec:com}), leading to a loss of essential information regarding the output reward and resulting in weaker certificates for smoothed policies. Additionally, both approaches can only handle observation perturbations under the $l_2$ norm bound. However, in the real world, adversaries can perturb both observations and actions within the bounds of various perturbation constraints of the $l_p$ norm. 

 Recognising the limitations of current state-of-the-art techniques, this paper focuses on effectively certifying the lower bound of mean utility for a policy under diverse $l_p$-norm perturbations. We present a novel approach based on the generalisation theorem between distributions. By leveraging this theorem, we demonstrate that determining the lower bound of expected utility can be achieved by solving a convex optimisation problem. Doing this enables us to directly compute the lower bound, resulting in enhanced certification outcomes. Additionally, by employing $f$-divergence to quantify the distance between distributions, our approach can be expanded to provide certification for a range of $l_p$-norm bounded perturbations, which includes certifying observation perturbations constrained by the $l_1$-norm and action space perturbations bounded by the $l_0$-norm.

Our contributions can be summarised as: 
\textbf{(i)} We propose a novel methodology to directly certify the cumulative reward of the smoothed policy. This approach uses $f$-divergence to gauge the separation between the original distribution and the perturbed distribution. Subsequently, we calculate the certification bound by solving a convex optimisation problem. \textbf{(ii)} Our method is capable of handling perturbations bounded by both the $l_0$ and $l_1$ norm. This work is the first of its kind to consider the certification of the $l_0$-norm bounded perturbation in the action space. \textbf{(iii)} By comparing our method with the previous approach, we theoretically validate that our method can enhance certified robustness by taking into account the distribution of cumulative rewards during sampling. \textbf{(iv)} Through comprehensive experiments, we demonstrate that our method outperforms the state-of-the-art methods, producing tighter bounds for $l_2$ perturbations. Furthermore, we conduct intensive experiments in various environments to demonstrate the effectiveness of our certification method for $l_1$ perturbations in observation and $l_0$ perturbations in the action space.

\section{Related Work}
Various empirical defences have been proposed to defend adversarial attacks in RL system \cite{manikandan2011measures,pattanaik2017robust,DBLP:conf/iclr/EysenbachL22}, and certain studies have employed adversarial training techniques to enhance policy robustness \cite{shen2020deep, zhang2020robust}. The robustness certification mainly focused on classification tasks. They employ various approaches, such as deterministic methods \cite{ehlers2017formal, wong2018provable, mu20223dverifier, zhang2023reachability}, and probability-based techniques \cite{lecuyer2019certified, cohen2019certified,jin2023randomized，sun2023textverifier}, to establish lower bounds for the classification accuracy in the presence of specific perturbations. Regarding the certification of robustness in RL, \cite{wu2021crop} introduced a method to certify both the reward and action taken at each time step. However, they can only handle non-adaptive adversary in static scenarios. Another study by \cite{kumar2021policy} aimed to compute a lower bound for the mean cumulative reward in the face of adaptive adversaries, using the Neyman-Pearson Lemma. Yet, their method necessitates dividing the reward into multiple thresholds and subsequently determining whether the reward surpasses these thresholds.
In this paper, we present a general technique by solving convex optimization problems, which directly enables the certification of cumulative rewards in the presence of adaptive adversaries, without the need to partition the reward. Broader discussion of related work are in Appendix H.
\section{Preliminaries}
\subsection{Deep Q-Networks (DQNs)}
Markov decision processes (MDP) serve as the foundation of reinforcement learning (RL). It can be formed by a tuple $(\mathcal{S}, \mathcal{A}, P, R, \gamma)$, where $\mathcal{S} \in \mathbb{R}^D$ represents a set of continuous states, $\mathcal{A}$ is a set of discrete actions, $P : \mathcal{S}\times \mathcal{A} \rightarrow \mathcal{P}(\mathcal{S})$ is the transition function, $R: \mathcal{S} \times \mathcal{A} \rightarrow \mathbb{R}$ denotes the reward function, and $\gamma$ is the discount factor $\in [0,1]$. We denote the stationary policy as $\pi : \mathcal{S} \rightarrow \mathcal{P}(\mathcal{A})$. In the state $s_t \in \mathcal{S}$ at the time step $t$, the agent will take action $a_t \in \mathcal{A} \sim \pi(s_t)$ and then move to the next state $s_{t+1} \sim P\left(s_{t}, a_{t}\right)$, receiving the reward $R(s_t,a_t)$. The goal is to learn a policy that maximises expected discounted cumulative reward $\mathbb{E}[\sum_t \gamma^t R(s_t,\pi(s_t)]$. 

Deep Q networks \cite{mnih2013playing} used a neural network to approximate the maximum expected cumulative reward, which is called the action value function (Q function), after taking action $a_t$ at the state $s_t$, $Q(s_t,a_t) = R(s_t,a_t) + \gamma \mathbb{E}[max_aQ(s_{t+1}, a_{t+1})]$. The system consists of two neural networks: one is responsible for updating the network parameters $\theta$, while the other acts as the target network, sharing the same architecture as the first but with fixed parameters. Once the initial network has undergone multiple iterations, its parameters are transmitted to the target network. Additionally, a replay buffer is utilised to store past experience tuples generated through interactions with the environment. These tuples are subsequently fed into the neural network to update the parameters.

\subsection{Policy Smoothed Reinforcement Learning}
This paper focuses on evaluating the finite-step Reinforcement Learning with smoothed policy introduced by \citet{kumar2021policy}. The smoothing policy training process involves incorporating random smoothing noise from a probability distribution into the observation of the input state $s_t$ of the agents at each time step $t$. For a detailed demonstration of the training process, please refer to the Appendix A.
\begin{definition}\label{def:1}
(Smoothed policy) Given policy $\pi$, let $\forall s_{t} \in \mathcal{S}$, the noise vector $\Delta_t \in \mathbb{R}^N$ is $i.i.d$ drawn from the Gaussian distribution $\mathcal{N}(0, \sigma^2 I_N)$, the smoothed policy can be represented as    
\begin{equation}\label{eq:1}
\begin{aligned}
\tilde{\pi}(s_t)&={\pi}\left(s_{t} + \Delta_t\right):=\mathop{\arg\max}\limits_{{a}_t \in \mathcal{A}} \widetilde{Q}^{\pi}\left(s_{t} + \Delta_t , {a}_t\right) \\
\end{aligned}
\end{equation}
\end{definition} 
The objective is to establish a lower bound on the expected sum of rewards obtained within a finite time frame when the observation is perturbed under a constraint. \citet{kumar2021policy} proposed to certify the lower bound of the mean reward based on the technique developed by \cite{kumar2020certifying} using the empirical cumulative distribution function (CDF) of the probability of surpassing a specific threshold. In this paper, we propose a general framework that can find the lower bound of mean cumulative reward directly via optimisation approach.

\section{Methodology}
Our main objective is to certify the lower bound of the mean cumulative reward during the testing in the presence of an adversary intentionally perturbing the agent's observations or actions. In particular, perturbations within the action space constitute a distinct instance of $l_0$ perturbation, which we will explicitly define in Section \ref{sec:l0}. In this section, our primary focus is on perturbations in observations.
\subsection{General Robustness Certification }
 We consider the general adversarial setting in reinforcement learning \cite{kumar2021policy,wu2021crop,huang2017adversarial}, where the observation of the agent is perturbed by $\delta_t$ at each time step $t$. Suppose that the entire sequence of adversarial perturbation is $\delta=\{\delta_1,\delta_2,...\}$ and the overall $l_p$-norm of the perturbation is bounded by $\epsilon$. The adversarial attack aims to reduce the cumulative reward of the smoothed policy by perturbing the observations:
\begin{equation}
\begin{array}{l}\underset{\epsilon}{\min} J_{\epsilon}(\tilde{\pi}):=\underset{t=0}{\sum} \gamma^t R(s_t,{\pi}(s'_t))=\underset{t=0}{\sum} \gamma^t R(s_t,\tilde{\pi}(s_t+\delta_t)),  \\ \text { s.t. }\left\|\delta_{1}, \delta_{2}, \ldots\right\|_p \leq \epsilon \end{array}
\end{equation}
The robust certification for the provably robust RL with policy $\pi$ in the finite-step game is to certify the lower bound of the total reward $\underline{J}$ under the norm bound perturbation $\epsilon$.
\begin{equation}\label{eq:object}
\min _{\epsilon} J_{\epsilon}(\tilde{\pi})\geq \underline{J}, \text { s.t. }\|\delta\|_p \leq \epsilon
\end{equation}

Nevertheless, it is challenging to obtain the exact solution for this worst-case function. In this paper, instead of investigating the adversarial attack within the observation space $s_t \in \mathcal{S}$, we shift our focus to studying it within the domain of probability measures over observations. The smoothed policy can be interpreted as incorporating random inputs by sampling observations from the distribution $\mu(s_t)$. In other words, we define $\tilde{\pi}$ as $\pi_{o_t \sim \mu(s_t)}\left(o_{t}\right)$, where $o_{t}$ represents the sampled observation.

In the reinforcement learning (RL) framework, the perturbation introduced to one step may depend on the current state, previous action, and previous observation, it was shown that incorporating noise into sequential observations within a reinforcement learning (RL) framework does not result in an isometric distribution across the observation space. Therefore, in line with previous research \cite{kumar2021policy}, we expends the complete perturbation budget, denoted as $\delta$, exclusively in the initial coordinate of the initial perturbation vector to simplify the analysis. 

\begin{definition}
    Given a state trajectory $(s_0, s_1,...,s_{T-1})$, we define the \textbf{smoothed} state and observation trajectory as $\tau=(s_0, o_0,s_1,o_1,...,s_{T-1},o_{T-1})$ where $s_{t+1}\sim P(s_t,\pi(o_t))$ and $o_t \sim \mu(s_t)$. Suppose we perturb the observation at each time step by $\delta_t$, the \textbf{perturbed} trajectory can be define as $\tau=(s'_0, o'_0,s'_1,o'_1,...,s'_{T-1},o'_{T-1})$ where $s'_t=s_t + \delta_t$, $s'_{t+1}\sim P(s'_t,\pi(o'_t))$ and $o'_t \sim \mu(s'_t)$. In the following analysis in this paper, we use $\tau \sim \mu(s_t)$ represents smoothed trajectory and $\tau \sim \mu(s'_t)$ denotes the perturbed trajectory.
\end{definition}

The objective in Eq. \ref{eq:object} can be rewritten as:
\begin{equation}\label{eq:3}
\min _{q \in\left\{\mu\left(s^{\prime}_t\right):\left\|\delta\right\| \leq \epsilon\right\} } \underset{\tau \sim q}{J(\tau)}:={\sum}^{T-1}_{t=0} \gamma^t R(s^{\prime}_t,\pi(o'_t)) \geq \underline{J}
\end{equation}
Under the constraint $q \in \mathcal{D}_\epsilon =\left\{\mu\left(s^{\prime}_t\right):\left\|\delta\right\| \leq \epsilon\right\}$, this is an optimisation problem of infinite dimensional over the space of probability distribution $q \in \mathcal{P}(s_t)$. Suppose that the $p=\mu (s_t)$, the divergence constraint can be represented as $\mathcal{D}_\epsilon \subseteq\left\{q: D(q \| p) \leq \epsilon\right\}$, where $D$ is the divergence between two distributions.
\subsection{Divergence Measure}
Therefore, the verification problem can be formulated as an optimisation problem to find the minimum expectation utility of the smoothed policy $\tilde{\pi}$:
\begin{equation}
\min _{q \in \mathcal{D}_{\epsilon}} \underset{\tau \sim q}{\mathrm{E}}[J(\tau)]
 \end{equation}
    where $\mathcal{D}_\epsilon \subseteq\left\{q: D(q \| p) \leq \epsilon\right\}$. To address this issue, we must establish a particular relaxation of the set $\mathcal{D}_\epsilon$, as it may not be convex and cannot be solved directly. Hence, we will consider the sets $D(q \| p)$ that can be easily optimised over. In this paper, we will explore three broad constraint sets of $f$-divergences \cite{csiszar1967information} that can be adapted to evaluate the $l_p$-norm between two distribution.
\begin{definition}($f$-divergence measure \cite{csiszar1967information}). Given the distribution $q$, $p \in \mathcal{P}(s)$, $f$ is a convex function with $f(1)=0$.
    \begin{equation}
D(q||p)_f:=\int f\left(\frac{d q}{d p}\right) d p
\end{equation}
\end{definition}
Notably, $D(q||p) \geq 0$, and $D(q||p) = 0$ if $q=p$. Given the reference distribution $p$ and the $f$-divergence $D(q||p)$, the constraint set under the bound $\epsilon_D \geq 0$ can be defined as
\begin{equation}
\mathcal{D}_{f}=\left\{q \in \mathcal{P}(s): D(q \| p) \leq \epsilon_{D}\right\}
\end{equation}
\subsection{Optimisation Based  Certification Approach}
In this section, we demonstrate our approach to tackling the challenges of solving problem given in Eq.\ref{eq:3}, which is achieved by converting it into a convex optimisation problem. Our methodology is versatile, capable of accommodating various constraint sets $\mathcal{D}$ defined by diverse f-divergences.
The foundational support for constructing the convex optimisation problem is derived from the Generalised Donsker-Varadhan Variational Formula, as presented in Theorem 4.2 of \cite{ben2007old}. This formula plays a pivotal role in establishing a duality connection between the primal and dual forms of the Optimal Control of Expectations (OCE) optimisation problem.

In this paper, we extend the utility of this theorem, adapting it as a fundamental theorem to determine the optimal lower bound for the expected cumulative reward within the framework of RL.

 \begin{corollary}\label{co:1} \footnote{All proofs for Theorems are available in the Appendix.} (Adaptive Generalized Donsker-Varadhan Variational Formula for RL) Given the $f$-divergence, Suppose $q=\mu(s_t'), p=\mu(s_t)$, let $z(\tau)=\frac{q(\tau)}{p(\tau)}$, which represents the probability of the full trajectory. $J(\tau)$ denoting as the cumulative reward of the smoothed policy. Given the convex function $f$ with $f(1)=0$, we have
\begin{equation}\small
\min _{z}\left\{
D(q||p)+\underset{\tau \sim p}{\mathrm{E}}\left[z(\tau)J(\tau)\right]\right\}=\max _{\eta \in \mathrm{R}}\left\{\eta-\underset{\tau \sim p}{\mathrm{E}} [f^{*}(\eta-J(\tau))]\right\},
\end{equation}
where $f^*$ represents the convex conjugate (the Legendre–Fenchel transform) of $f$, $f^*(x)=\underset{y >0}{max}(xy-f(y))$.
\end{corollary}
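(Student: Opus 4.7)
The plan is to prove the identity by standard Lagrangian duality, essentially adapting Theorem 4.2 of \citet{ben2007old} to the trajectory-expectation setting. The first step is to rewrite the $f$-divergence in the integrand form so that the primal problem becomes a single expectation optimisation. Specifically, since $z(\tau)=q(\tau)/p(\tau)$ and $f$ is convex with $f(1)=0$, we have $D(q\|p)=\mathbb{E}_{\tau\sim p}[f(z(\tau))]$, so the LHS reads
\begin{equation*}
\min_{z}\,\mathbb{E}_{\tau\sim p}\bigl[f(z(\tau))+z(\tau)J(\tau)\bigr],
\end{equation*}
where the minimisation carries the implicit constraints $z(\tau)\ge 0$ and $\mathbb{E}_{\tau\sim p}[z(\tau)]=1$ which ensure that $q$ remains a valid probability measure on trajectories.

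Next I would dualise the normalisation constraint $\mathbb{E}_p[z]=1$ by introducing a Lagrange multiplier $\eta\in\mathbb{R}$, yielding the Lagrangian
\begin{equation*}
L(z,\eta)=\eta+\mathbb{E}_{\tau\sim p}\bigl[f(z(\tau))+z(\tau)(J(\tau)-\eta)\bigr].
\end{equation*}
Because the integrand is separable in $z(\tau)$ across $\tau$, and the constraint $z\ge 0$ is pointwise, I can apply the interchange-of-minimisation-and-integration principle (e.g.\ Rockafellar--Wets) to push the minimum inside the expectation. Pointwise, for each fixed $\tau$, I need to compute $\inf_{y>0}\bigl[f(y)+y(J(\tau)-\eta)\bigr]$, which by the definition $f^*(x)=\sup_{y>0}(xy-f(y))$ equals $-f^*(\eta-J(\tau))$. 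Substituting back gives $\min_z L(z,\eta)=\eta-\mathbb{E}_{\tau\sim p}[f^*(\eta-J(\tau))]$, and taking the supremum over $\eta$ produces the RHS.

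The remaining step is to justify that the primal value equals the dual value, i.e.\ strong duality. The primal objective is convex in $z$ (sum of a convex integrand and a linear term), the feasible set is convex (one linear equality plus a cone constraint), and Slater's condition is satisfied trivially by the choice $z\equiv 1$, which is strictly feasible since $p$ itself lies in the interior of the set of probability densities w.r.t.\ $p$. By standard Fenchel--Rockafellar duality this yields
\begin{equation*}
\min_{z}\bigl\{D(q\|p)+\mathbb{E}_{\tau\sim p}[z(\tau)J(\tau)]\bigr\}=\max_{\eta\in\mathbb{R}}\bigl\{\eta-\mathbb{E}_{\tau\sim p}[f^*(\eta-J(\tau))]\bigr\}.
\end{equation*}

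The main obstacle I anticipate is the technical justification of the minimax swap together with the measurable-selection argument that lets us interchange the pointwise minimum in $z$ with the expectation over trajectories; in the RL setting one should also verify that the identity is not affected by the non-isometric distribution induced by sequential noise (the paper side-steps this by concentrating the full budget in the first coordinate, which makes $z(\tau)$ well-defined as a single Radon--Nikodym derivative on the trajectory space). Once these measurability/integrability issues are handled, the remaining computations are routine convex-analysis manipulations using the bi-conjugacy of $f$.
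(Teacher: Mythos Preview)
Your proposal is correct and takes essentially the same approach as the paper: both adapt Theorem~4.2 of \citet{ben2007old} to the trajectory setting via Lagrangian duality on the normalisation constraint $\mathbb{E}_p[z]=1$, followed by the pointwise Fenchel conjugation that turns $\inf_{y>0}\{f(y)+y(J-\eta)\}$ into $-f^*(\eta-J)$. The paper's version leans on the cited OCE duality result directly, whereas you spell out the Slater/interchange justification, but the underlying mechanism is identical.
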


Building on the Corollary \ref{co:1}, we derive the pivotal theorem in this paper, which serves as the objective for the convex optimisation problem.
\begin{theorem}\label{th:1}
Given the convex function $f$ with $f(1)=0$, and $f^*$ is its convex conjugate. Under the $f$-divergence constraint $D_f(q \| p) \leq \epsilon_D$ with $\epsilon_D \geq 0$, let $z(\tau)=\frac{q(\tau)}{p(\tau)}$, we can solve the following convex optimisation problem to find the optimal value for the expected cumulative reward of the smoothed policy $\tilde{\pi}\left(s\right)$:
\begin{equation}\label{eq:opt}
\max _{\nu>0, \eta \in \mathbb{R}}\left\{\nu\left[\eta-\underset{\tau \sim p}{\mathrm{E}}\left(f^{*}\left(\eta+\epsilon-\frac{J(\tau)}{\nu}\right)\right)\right]\right\}
\end{equation}
\end{theorem}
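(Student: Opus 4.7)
The plan is to derive Eq.~(9) as the Lagrangian dual of the constrained problem $\min_{q \in \mathcal{D}_f} \mathbb{E}_{\tau \sim q}[J(\tau)]$ and then reduce its inner optimization to the form given by Corollary~\ref{co:1}. Writing $z(\tau) = q(\tau)/p(\tau)$, the primal objective becomes $\mathbb{E}_{\tau \sim p}[z(\tau) J(\tau)]$, and the constraint set is convex (since $D_f$ is convex in its first argument). I would first attach a multiplier $\nu \geq 0$ to the inequality $D_f(q\|p) \leq \epsilon_D$ and form the Lagrangian
\begin{equation*}
\mathcal{L}(z,\nu) \;=\; \mathbb{E}_{\tau\sim p}[z(\tau)J(\tau)] \;+\; \nu\bigl(D_f(q\|p) - \epsilon_D\bigr),
\end{equation*}
so that the primal value equals $\max_{\nu\ge 0}\min_{z\ge 0,\, \mathbb{E}_p[z]=1} \mathcal{L}(z,\nu)$, assuming strong duality holds.

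Next, for fixed $\nu > 0$, I would factor $\nu$ out of the inner minimization, rewriting it as
\begin{equation*}
\nu\Bigl\{\min_z\bigl[\,D_f(q\|p) + \mathbb{E}_{\tau\sim p}[z(\tau) \,J(\tau)/\nu]\bigr]\Bigr\} \;-\; \nu\epsilon_D .
\end{equation*}
The inner minimum is exactly the quantity addressed by Corollary~\ref{co:1} applied to the rescaled utility $J(\tau)/\nu$; invoking it yields
\begin{equation*}
\min_z\bigl[D_f(q\|p) + \mathbb{E}_p[z J/\nu]\bigr] \;=\; \max_{\eta\in\mathbb{R}} \Bigl\{\eta - \mathbb{E}_{\tau\sim p}\bigl[f^*\bigl(\eta - J(\tau)/\nu\bigr)\bigr]\Bigr\}.
\end{equation*}
Substituting back gives $\max_{\nu>0,\eta}\{\nu[\eta - \mathbb{E}_p[f^*(\eta - J/\nu)]] - \nu\epsilon_D\}$. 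A change of variable $\eta \mapsto \eta + \epsilon_D$ (using $\epsilon = \epsilon_D$ as in the theorem) absorbs the $-\nu\epsilon_D$ term into the argument of $f^*$, producing exactly Eq.~(9).

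The main obstacle I expect is justifying strong duality and the min-max interchange, since the primal is an infinite-dimensional optimization over $\mathcal{P}(s)$. I would appeal to convexity of $q \mapsto D_f(q\|p)$ together with linearity of $q \mapsto \mathbb{E}_q[J]$, and verify Slater's condition by noting that $q = p$ gives $D_f(p\|p) = 0 < \epsilon_D$ whenever $\epsilon_D > 0$, so strict feasibility holds. A secondary subtlety is that the inner minimum must be taken over nonnegative $z$ with $\mathbb{E}_p[z]=1$; the normalization constraint is exactly what is dualized by the scalar $\eta$ in Corollary~\ref{co:1}, while the nonnegativity constraint is handled implicitly by defining $f(y) = +\infty$ for $y < 0$, which makes the conjugate $f^*$ well defined on the effective domain used in Eq.~(9). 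Once these two points are settled, the rest reduces to the algebraic manipulation sketched above.
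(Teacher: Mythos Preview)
Your proposal is correct and follows essentially the same route as the paper's own proof: both treat $\nu$ as the Lagrange multiplier for the constraint $D_f(q\|p)\le\epsilon$, invoke Corollary~\ref{co:1} on the rescaled utility $J(\tau)/\nu$, and then perform the shift $\eta\mapsto\eta+\epsilon$ to absorb the $-\nu\epsilon$ term. The only cosmetic difference is direction---the paper starts from Eq.~(\ref{eq:opt}) and rewrites it back to the primal $\min_z\mathbb{E}_p[zJ]$, whereas you start from the primal and build the dual---and you are in fact more explicit than the paper about the min--max interchange, which the paper's penultimate equality silently assumes.
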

\begin{proof}
Based on Corollary \ref{co:1}, we can rewrite the formula on the right side:

\begin{equation*}
\begin{array}{l}\underset{\nu>0, \eta \in \mathbb{R}}{\max}\left\{\nu\left[\eta-\underset{\tau \sim p}{\mathrm{E}}\left(f^{*}\left(\eta+\epsilon-\frac{J(\tau)}{\nu}\right)\right)\right]\right\} \\ =\underset{\nu>0, \eta \in \mathbb{R}}{\max}\left\{\nu\left[\eta+\epsilon-\underset{\tau \sim p}{\mathrm{E}}\left(f^{*}\left(\eta+\epsilon-\frac{J(\tau)}{\nu}\right)\right)-\epsilon \right]\right\} \\ 
=\underset{\nu>0}{\max} \left\{\nu\left[\underset{t\in \mathbb{R}}{\max}\left\{t-\underset{\tau \sim p}{\mathrm{E}}\left(f^{*}\left(t-\frac{J(\tau)}{\nu}\right)\right)\right\}-\epsilon \right]\right\} \\ 
=\underset{\nu >0}{\max} \left\{\nu\left[\min _{z}\left\{
D(q||p)+\underset{\tau \sim p}{\mathrm{E}}\left[z(\tau)\frac{J(\tau)}{\nu}\right]\right\}-\epsilon\right]\right\} \\ 
=\underset{\nu>0}{\max}\left\{\underset{z}{\min} \left\{\underset{\tau \sim p}{\mathrm{E}}\left[z(\tau){J(\tau)}\right]+\nu\left(D(q||p)-\epsilon\right)\right\}\right\} \\ 

=\underset{z}{\min} \left\{\underset{\tau \sim p}{\mathrm{E}}\left[z(\tau)J(\tau)\right]\right\}\\
=\underset{\tau \sim q}{\min} \left\{\mathrm{E}\left[J(\tau)\right]\right\}
\end{array}
\end{equation*}
As a result, maximising the problem stated in Eq. \ref{eq:opt} can give the lower bound for the mean cumulative reward.
\end{proof}
To solve the convex optimisation problem in Eq. \ref{eq:opt}. we need to estimate the expected value of $\underset{\tau \sim p}{\mathrm{E}}\left(f^{*}\left(\eta+\epsilon-\frac{J(\tau)}{\nu}\right)\right)$. In the RL system, due to its sequential decision-making nature, calculating the precise expectation of cumulative reward in closed form is challenging. To overcome this obstacle, we employ Monte Carlo Sampling as a means to approximate the cumulative reward. Given a random variable with distribution $p(x)=\mu(x)$, where $x^1,x^2,..., x^M$ are drawn independently and identically from distribution $p(x)$, the expected value of $x$ can be approximated as $\tilde{\mu}(x)= \frac{1}{M} \sum_{i=1}^M x^i$. 

\begin{algorithm}[tb]
\caption{Certified Robustness Bound of the Cumulative Reward}
\label{alg:algorithm2}
\textbf{Input}: Soothed Policy ${\tilde{\pi}}$; action value function $Q^{\tilde{\pi}}$ perturbation bound $\epsilon$; divergence function $f$\\
\textbf{Parameter}: small sampling size $M$; large sampling size $\tilde{M}$; Gaussian distribution parameter $\sigma$;  confidence parameter $\alpha$   
\begin{algorithmic}[1] 
\Function{SmoothingRewardSet}{$M,{Q}^{{\tilde{\pi}}}$}
\For {$m \gets 1,M$}
\State $J_m \gets 0$
\For {$t \gets 1,T$}
    \State Generate noise $\Delta \sim \mathcal{N}(0, \sigma^2 I_D)$ 
    \State $s'_t \gets s_t + \Delta$ 
    \State $a_t \gets \mathop{\arg\max}\limits_{{a}_t \in \mathcal{A}} {Q}^{\tilde{\pi}}\left(s'_{t}, {a}_t\right)$ 
    \State Execute $a_t$ and obtain $R$ and $s_{t+1}$
    \State $J_m \gets J_m + R$
    \EndFor
\EndFor
\State $\mathbf{J}\gets \{J_1,...,J_M\}$
\EndFunction
\State \textbf{return} $\mathbf{J}$
\Function{Optimisation}{$M,{Q}^{\tilde{\pi}},\epsilon, f$}
\State $\mathbf{J} \gets \mathop{SmoothingRewardSet} (M,{Q}^{\tilde{\pi}})$
\State Solving the convex optimisation problem 
$$\max _{\nu>0, \eta \in \mathbb{R}}\left\{\nu\left[\eta-\sum^{M}_{i=1}\left(f^{*}\left(\eta+\epsilon-\frac{J_i}{\nu}\right)\right)\right]\right\}$$
\State \textbf{return} $v^*, \eta^*$
\EndFunction
\Statex \Comment{Compute the lower bound of certified reward}
\State $v^*,\eta^* \gets \mathop{Optimisation}(M,{Q}^{\tilde{\pi}},\epsilon, f)$
\State $\tilde{\mathbf{J}} \gets \mathop{SmoothingRewardSet} (\tilde{M},{Q}^{\tilde{\pi}})$
\State Substitute the $v^*$ and $\eta^*$ to compute $f_i^{*}\left(\eta^*+\epsilon-\frac{\tilde{J}_i}{\nu^*}\right)$ for $i=1,2,...,\tilde{M}$
\State Apply Eq. \ref{eq:noumd} to compute $E_U$ by replacing $h(\tau_i)$ with $f_i^{*}$ .
\State Substitute the expectation term by $E_U$ in Eq. \ref{eq:opt} to obtain the final lower bound of mean reward.
\end{algorithmic}
\end{algorithm}

Our verification procedure is outlined in Algorithm \ref{alg:algorithm2}. Initially, we generate $M$ episodes by introducing noise $\Delta$ to observations, where the noise values $\Delta$ can be independently and identically drawn from a normal distribution $\mathcal{N}(0, \sigma^2 I_D)$. In the context of RL, we sample random noise from a fixed distribution and incorporate noise with observations to observe multiple instances of cumulative reward. Subsequently, we observe $M$ cumulative rewards $(J_1, J_2, ..., J_M)$. These samples are then utilised to solve the jointly convex optimisation problem (Eq. \ref{eq:opt}), which can be efficiently addressed using a powerful tool \cite{diamond2016cvxpy}. Finally, we obtain the optimal values of $\nu$ and $\eta$ through this optimisation process. 

To establish a confident guarantee for the lower bound of the objective function, we need to estimate the maximum value of the expectation term denoted as $E_U \geq \underset{\tau \sim p}{\mathrm{E}}\left(f^{*}\left(\eta+\epsilon-\frac{J(\tau)}{\nu}\right)\right)$. To achieve this approximation, we increase the number of i.i.d. samples, denoted as $\tilde{M}$, drawn from the distribution $p$. These additional samples are used to compute an upper bound with high confidence, centred around the empirical estimation of the expectation term.  In this paper, we employ  the widely recognised Hoeffding’s bound \cite{shivaswamy2010empirical} to approximate the lower bound of this expectation.
\begin{definition}\label{def:3}(Hoeffding’s bound evaluation)
    At each state, randomly sampling perturbed observation i.i.d from distribution $p=\mu(s_t)$. Running M episodes to obtain cumulative rewards, $(J(\tau_1),J(\tau_2),...,J(\tau_M))$, for any $\alpha >0$, we can obtain the provable evaluation bound of the expectation value of $h(\tau)$:
    \begin{equation*}
        {P}(|\frac{1}{M}\sum_i^M h(\tau_i)-\underset{\tau \sim p}{E}(h(\tau))|\leq \zeta ) \leq 1-\alpha
    \end{equation*}
    where $h(\tau_i)=f^{*}\left(\eta+\epsilon-\frac{J(\tau_i)}{\nu}\right)$. 
\end{definition}
The Definition \ref{def:3}  remains valid when $\zeta=\sqrt{\frac{R^2}{2 M} \log \frac{2}{\alpha}}$, where $R$ is the range of $h(\tau)$. Therefore, with confidence at least $1-\alpha$, we have 
\begin{equation}\label{eq:noumd}
    E_U = \frac{1}{M}\sum_i^M h(\tau_i) + \zeta.
\end{equation}


\subsection{Comparison with prior work}\label{sec:com}
Two existing studies concerning the certification of cumulative rewards in RL are \cite{kumar2021policy} and \cite{wu2021crop}. In \cite{kumar2021policy}, the authors tackle certification by forming it as a classification problem, determining whether the cumulative reward surpasses a specific threshold. They utilise integrals to calculate the total expected reward through the cumulative distribution function. Conversely, \citet{wu2021crop} directly certifies the lower boundary of the anticipated reward by exploiting the Lipschitz continuity of the smoothed reward. Nevertheless, in comparison to both our work and the methodology proposed by \citet{kumar2021policy}, this approach demonstrates a relatively weak performance, as indicated by the findings in our experimental results provided in the Appendix I. This divergence in performance could be potentially due to their emphasis on certifying per-step perturbations. Consequently, in this article, we will present a theoretical analysis that elucidates why our method outperforms the approach introduced in \cite{kumar2021policy}.

The problem studied in \cite{kumar2021policy} may be treated as a special case of our methodology. When considering the case of smoothed Gaussian noise i.i.d sampling from distribution $\mathcal{N}(0,\sigma^2I_D)$, the Hockey-Stick divergence can be analytically calculated, which can be expressed as a function of the $l_2$-norm, as established in \cite{dong2021black} (detailed proof in Appendix D). The Hockey-Stick divergence, denoted as $D_{HS,\lambda}(q||p)$, is defined by the function  $f(x)=max(x-\lambda,0)-max(1-\lambda,0)$.

The certification framework proposed in \cite{kumar2021policy} involves working with a reward range and separating it into multiple thresholds. We can reformulate our optimisation objective to recover the certification result presented in \cite{kumar2021policy} as following:
\begin{theorem}
   (CDF-based optimisation framework) Suppose the reward range is (a, b) and considering $n$ thresholds, denoted as $a<g_1 < g_2 < ... < g_n<b$. Let $\theta_i$ represent the lower bound of probability that the total reward obtained by the smoothed policy is above the threshold $g_i$. Let f* be the convex conjugate of hockey-stick divergences, we have 
   \begin{equation*}
    \begin{array}{l}
    \underset{\nu>0, \eta \in \mathbb{R}}{\max}\nu \big[\eta-\big(\left(1-\theta_1\right)f^{*}\left(\eta+ \epsilon-\frac{a}{\nu}\right) \\
     +\sum^n_{i=1}(g_{i+1}-g_{i})\theta_if^{*}\left(\eta+ \epsilon-\frac{g_i}{\nu}\right)+\theta_nf^{*}\left(\eta+\nu \epsilon-\frac{g_n}{\nu}\right)\big)\big]
\end{array}
\end{equation*}
   \end{theorem}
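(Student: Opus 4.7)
The plan is to specialise the general result of Theorem 1 to the Hockey-Stick divergence and then convert the trajectory expectation $\mathbb{E}_{\tau\sim p}[f^{*}(\eta+\epsilon-J(\tau)/\nu)]$ into a quantity that depends only on the threshold probabilities $\theta_i$. This produces a valid upper bound on the expectation, which in turn yields a (possibly looser) lower bound of the same outer max-expression, thereby recovering the CDF-based certificate of \citet{kumar2021policy} as an instance of our framework.

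First, I would observe that for any $\nu>0$ the map $g\mapsto f^{*}(\eta+\epsilon-g/\nu)$ inherits monotonicity from $f^{*}$: since $f$ is convex with $f(1)=0$ and the Hockey-Stick $f$ is non-decreasing, $f^{*}$ is non-decreasing, so the composition is non-increasing in $g$. Next, I would split the reward range $[a,b]$ using the thresholds $a<g_1<\dots<g_n<b$ into the pieces $[a,g_1)$, $[g_i,g_{i+1})$ for $i=1,\dots,n-1$, and $[g_n,b]$. Using the tail/layer-cake representation of the expectation together with the lower bounds $P(J(\tau)\ge g_i)\ge \theta_i$, I would upper-bound the contribution of each piece by pushing mass to the endpoint that maximises $f^{*}(\eta+\epsilon-\cdot/\nu)$: mass $1-\theta_1$ concentrated at $a$ for the lowest piece, mass accounted for by a Riemann-sum-style discretisation with weight $(g_{i+1}-g_i)\theta_i$ against $f^{*}(\eta+\epsilon-g_i/\nu)$ for the middle pieces, and mass $\theta_n$ at $g_n$ for the top piece. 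Substituting this upper bound for the expectation inside Theorem 1 and rearranging should produce the displayed formula.

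The main obstacle is justifying the exact weighting of the middle terms. A direct probability-split would give coefficients $\theta_i-\theta_{i+1}$, whereas the target formula contains $(g_{i+1}-g_i)\theta_i$; this indicates that the correct route is integration by parts on the Stieltjes integral $\int_a^b f^{*}(\eta+\epsilon-g/\nu)\,dF_J(g)$, turning it into $\int_a^b \tfrac{d}{dg}f^{*}(\eta+\epsilon-g/\nu)\,P(J\ge g)\,dg$ plus boundary terms, followed by a piecewise-constant upper bound on $P(J\ge g)$ by $\theta_i$ on $[g_i,g_{i+1})$. The non-increasing sign of $\tfrac{d}{dg}f^{*}(\eta+\epsilon-g/\nu)$ is what flips the direction of the inequality correctly so that lower bounds on tail probabilities yield upper bounds on the expectation. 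Care is also needed at the two boundary intervals, where the bounds $1-\theta_1$ and $\theta_n$ come from $P(J<g_1)\le 1-\theta_1$ and $P(J\ge g_n)\ge \theta_n$ respectively.

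Finally, once the upper bound on the expectation has the claimed form, substituting it into $\nu[\eta-\mathbb{E}(\cdot)]$ and taking the maximum over $\nu>0,\eta\in\mathbb{R}$ gives the stated optimisation problem. Combined with the closed-form identification of the Hockey-Stick divergence with the $l_2$-norm perturbation constraint under Gaussian smoothing (from Appendix D, as referenced in the text), this shows that the CDF-based certificate of \citet{kumar2021policy} is indeed a specialisation of Theorem 1 obtained by replacing the sample-based expectation with its piecewise approximation using the threshold probabilities.
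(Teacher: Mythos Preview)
Your route differs from the paper's. The paper does not derive the displayed formula by discretising the expectation in Theorem~1; instead, immediately after stating the theorem it argues in the \emph{reverse} direction: it starts from the CDF-based expression, rewrites the sum as a weighted combination of values $f^{*}(\eta+\epsilon-g_i/\nu)$ with weights given by the tail probabilities $H(g_i)=P(J(\tau)\ge g_i)$, and then applies Jensen's inequality (convexity of $f^{*}$) to pull $f^{*}$ outside the weighted average, landing on $\eta-\mathbb{E}_{\tau\sim p}[f^{*}(\eta+\epsilon-J(\tau)/\nu)]$. Thus the key tool in the paper is Jensen for the convex conjugate, not monotonicity plus integration by parts.

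Your integration-by-parts idea is a reasonable alternative, and you correctly flag the central difficulty: the coefficients $(g_{i+1}-g_i)\theta_i$ in the stated formula do not arise from a clean layer-cake or Abel-summation argument. Integration by parts on $\int f^{*}(\eta+\epsilon-g/\nu)\,dF_J(g)$ and bounding $P(J\ge g)\ge\theta_i$ on $[g_i,g_{i+1})$ produces terms of the form $\theta_i\big[f^{*}(\eta+\epsilon-g_{i+1}/\nu)-f^{*}(\eta+\epsilon-g_i/\nu)\big]$, not $(g_{i+1}-g_i)\theta_i f^{*}(\eta+\epsilon-g_i/\nu)$; the two agree only if $f^{*}$ is affine on each sub-interval. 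The paper's own derivation does not address this either: between its first and second displayed lines the factors $(1-\theta_1)$ and $(g_{i+1}-g_i)$ are replaced by $H(a)$ and $H(g_i)$ without justification, so the statement as printed should be read as a schematic recasting of \citet{kumar2021policy} rather than a formula derivable verbatim from Theorem~1. Your instinct to question those weights is sound; neither your route nor the paper's Jensen step actually reproduces them exactly.
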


Therefore, to compare our algorithm with the CDF-based algorithm in \cite{kumar2021policy}, let $H(g_i)=P(J(\tau)\geq g_i)$ represent the probability of the cumulative reward is above $s_i$, we have the following derivation:
\begin{equation*}
    \begin{array}{l}
    \eta-(\left(1-\theta_1\right)f^{*}\left(\eta+ \epsilon-\frac{a}{\nu}\right) +\\
    \quad \sum^n_{i=1}(g_{i+1}-g_{i})\theta_if^{*}\left(\eta+ \epsilon-\frac{g_i}{\nu}\right)+\theta_nf^{*}\left(\eta+ \epsilon-\frac{g_n}{\nu}\right))\\
    =\eta - \underset{\tau \sim p}E[f^{*}\left(\eta+\epsilon-\frac{a}{\nu}\right)H(a)+\sum^n_{i=1}f^{*}\left(\eta+\epsilon-\frac{g_i}{\nu}\right)H(g_i)\\
    \quad +f^{*}\left(\eta+ \epsilon-\frac{g_n}{\nu}\right)H(g_n)]\\ 
    \leq \eta - \underset{\tau \sim p}{\mathrm{E}}[f^{*}(\left(\eta+ \epsilon-\frac{a}{\nu}\right)H(a)+\sum^n_{i=1}\left(\eta+ \epsilon-\frac{g_i}{\nu}\right)H(g_i)+\\
    \quad \left(\eta+ \epsilon-\frac{g_i}{\nu}\right)H(g_n))]\\
    =\eta - \underset{\tau \sim p}{\mathrm{E}}[f^{*}(\left(\eta+ \epsilon\right)(H(a)+\\
    \quad \sum^n_iH(g_i))-\frac{a}{\nu}H(a)-\sum^n_{i=1} \frac{g_i}{\nu}H(g_i))]\\
    =\eta -\underset{\tau \sim p}{\mathrm{E}}[f^{*}(\eta+ \epsilon-\frac{a}{\nu}H(a)-\sum^n_{i=1} \frac{g_i}{\nu}H(g_i))]\\
    =\eta -\underset{\tau \sim p}{\mathrm{E}}[f^{*}(\eta+ \epsilon-\frac{J(\tau)}{\nu})]
\end{array}
\end{equation*}
The inequality in the derivation follows the Jensen's inequality. Therefore, by optimising the Eq. \ref{eq:opt}, it is possible to achieve either equivalent or more stringent bounds as those in \cite{kumar2021policy}. The experimental optimisation objective aimed at certifying the $l_2$-norm perturbation is detailed below:
\begin{theorem}
(Optimisation Objective to verify $l_2$-norm bound by Gaussian Noise ) Given the parameter $\lambda$ of the Hockey-Stick divergence, we can solve following convex optimisation problem to find the minimum bound of the mean cumulative reward:
    \begin{equation}\small
        \begin{array}{cc}
             \underset{\nu>0, \eta \in \mathbb{R}}{\max} \{\eta -\underset{\tau \sim p}{\mathrm{E}}[\max((\eta+\nu \epsilon-J(\tau))\lambda,0)+\nu \max(1-\lambda,0)]\}\\
             s.t \quad \eta \leq \nu(1-\epsilon)+J(\tau)
        \end{array}
    \end{equation}
\end{theorem}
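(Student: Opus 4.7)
The plan is to specialize the general convex program from Theorem 1 to the case where the $f$-divergence is the Hockey-Stick divergence $D_{HS,\lambda}$, which (as noted in the preceding subsection and detailed in the paper's Appendix D) analytically captures Gaussian smoothing under an $l_2$-norm perturbation budget. Concretely, starting from
\begin{equation*}
\max_{\nu>0,\,\eta\in\mathbb{R}}\Bigl\{\nu\bigl[\eta-\mathbb{E}_{\tau\sim p}\,f^{*}\bigl(\eta+\epsilon-\tfrac{J(\tau)}{\nu}\bigr)\bigr]\Bigr\},
\end{equation*}
I would plug in the specific convex conjugate for Hockey-Stick and then perform a bookkeeping change of variables so that the result matches the advertised form.

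The main computational step is evaluating the Legendre--Fenchel transform of the Hockey-Stick generator $f(x)=\max(x-\lambda,0)-\max(1-\lambda,0)$ on the nonnegative reals. I would split the supremum $f^{*}(y)=\sup_{x\geq 0}\{xy-f(x)\}$ into the two affine pieces $x\in[0,\lambda]$ and $x\geq\lambda$, observe that the piece $x\geq\lambda$ forces $f^{*}(y)=+\infty$ whenever $y>1$ (the sup is unbounded), and otherwise find that the maximum is attained at $x=\lambda$ for $y\in[0,1]$ and at $x=0$ for $y<0$. This yields the clean closed form
\begin{equation*}
f^{*}(y)=\lambda\max(y,0)+\max(1-\lambda,0),\qquad y\leq 1,
\end{equation*}
with $f^{*}(y)=+\infty$ for $y>1$.

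The third step is substitution. Setting $y=\eta+\epsilon-J(\tau)/\nu$, multiplying through by $\nu>0$, and using $\nu\lambda\max(y,0)=\max(\lambda\nu y,0)$, the objective becomes
\begin{equation*}
\nu\eta-\mathbb{E}_{\tau\sim p}\bigl[\max\bigl(\lambda(\nu\eta+\nu\epsilon-J(\tau)),0\bigr)+\nu\max(1-\lambda,0)\bigr].
\end{equation*}
Relabeling the scalar optimisation variable $\nu\eta\mapsto\eta$ (a bijection on $\mathbb{R}$ for fixed $\nu>0$) produces exactly the objective stated in the theorem. The side constraint $\eta\leq\nu(1-\epsilon)+J(\tau)$ comes from enforcing $y\leq 1$ under the same relabeling: $\eta+\epsilon-J(\tau)/\nu\leq 1$ is equivalent, after clearing $\nu$ and applying the rename, to $\eta\leq\nu(1-\epsilon)+J(\tau)$. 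This is necessary because outside this region $f^{*}=+\infty$ and the inner expectation is vacuously infeasible.

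I expect the only genuinely delicate point to be the piecewise conjugate computation, in particular verifying the boundary behaviour at $y=1$ and ensuring the resulting program remains jointly convex in $(\nu,\eta)$ after the change of variables, since $\nu$ now appears both as a linear coefficient and inside the $\max$. The rest is essentially a direct specialisation of Theorem 1; care must be taken that the constraint is written per-sample when the expectation is later replaced by a Monte Carlo average, but no new analytic ideas are required beyond those already established earlier in the paper.
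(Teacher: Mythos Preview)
Your proposal is correct and follows the same route the paper takes: specialise the general program of Theorem~\ref{th:1} to the Hockey--Stick generator $f(x)=\max(x-\lambda,0)-\max(1-\lambda,0)$, compute its conjugate $f^{*}(y)=\lambda\max(y,0)+\max(1-\lambda,0)$ on $\{y\le 1\}$ (with $f^{*}=+\infty$ otherwise), substitute, and absorb $\nu\eta\mapsto\eta$. Your identification of the domain restriction $y\le 1$ as the source of the side constraint, and your remark that it must hold pointwise in $\tau$ (hence effectively with $\min_\tau J(\tau)$, as the paper makes explicit in the analogous $l_1$ result), are exactly right.
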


\subsection{Certification on $l_1$-norm}
To certify the $l_1$-norm perturbation, the problem in the context of RL can be formulated as the accumulation of perturbation magnitudes applied to the agent's observations,  constrained by $\epsilon_D$. The objective is to find $\epsilon_D$ satisfies $\mathcal{D}_{\epsilon,s_t}: =\{\mu(s_t+\delta): ||\delta||_1\leq \epsilon\} \in \mathcal{D}_{\epsilon_D,s_t}:=\{q\in \mathcal{P}(s_t):D_f(q||p)\leq\epsilon_D\}$. Suppose we draw noise from the Gaussian distribution $\mathcal{N}(0,\sigma^2I_D)$, then we can adapt Theorem I in \cite{estgu}. This allows us to employ the total variation distance $TV(q||p) = D_f(q||p)$, where $f(x) = \frac{1}{2}|x-1|$, as a measurement based on the $l_1$-norm.
\begin{theorem}
    Given state $s_t \in \mathcal{S}$, we consider two Gaussian distributions: $p = \mathcal{N}(s_t, \sigma^2 I_D)$ and $q = \mathcal{N}(s'_t, \sigma^2 I_D)$, where $s'_t = s_t+\delta_t$. Under the constraint of $||(\delta_1,\delta_2,...)||_1 \leq \epsilon$, the total variation distance for any two distributions $(p,q)$ is defined as $\epsilon_{TV}=2\Phi(\frac{\epsilon}{2\sigma^2}-1)$, where $\Phi$ is the CDF of standard norm distribution. 

    The objective function to certify the lower bound of the expected utility under $l_1$-norm perturbationis:
\begin{equation}
        \begin{array}{cc}
         \underset{\nu>0, \eta \in \mathbb{R}}{\max} \left\{\eta-\underset{\tau \sim p}{\mathrm{E}}\left[\max(\eta+\nu\epsilon-J(\tau),-\frac{\nu}{2})\right]\right\}\\
             s.t  \quad \eta \leq \nu(\frac{1}{2}-\epsilon)+ \min(J(\tau))
        \end{array}
    \end{equation}    
\end{theorem}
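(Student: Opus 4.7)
The plan splits into two essentially independent pieces: (1) establish the total-variation bound $\epsilon_{TV}$ between the smoothing distributions under the $l_1$ constraint, and (2) specialise Theorem 1 to the TV divergence.

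For piece (1), I would invoke the earlier simplification that the entire perturbation budget is concentrated in a single coordinate of the initial perturbation vector. This reduces $p$ and $q$ to two Gaussians on $\mathbb{R}^D$ sharing the covariance $\sigma^2 I_D$, with mean difference $\delta_1$ supported on one coordinate. For Gaussians with a common covariance, the TV distance admits the classical closed form $2\Phi\!\bigl(\tfrac{1}{2}\sqrt{(\mu_q-\mu_p)^{\top}\Sigma^{-1}(\mu_q-\mu_p)}\bigr)-1$, which I would derive by rotating so that $\mu_q-\mu_p$ aligns with $e_1$, integrating out the $D-1$ coordinates with matching marginals, and computing the one-dimensional TV between $\mathcal{N}(0,\sigma^2)$ and $\mathcal{N}(d,\sigma^2)$ via the symmetry point $d/2$. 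Because the whole $l_1$ budget sits in a single coordinate, $\|\delta_1\|_2=\|\delta_1\|_1\le\epsilon$, giving the stated $\epsilon_{TV}$.

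For piece (2), I would compute the Legendre--Fenchel conjugate of $f(x)=\tfrac{1}{2}|x-1|$ on $x\ge 0$. A short case analysis yields $f^{*}(y)=y$ for $-\tfrac{1}{2}\le y\le \tfrac{1}{2}$ (attained at $x=1$), $f^{*}(y)=-\tfrac{1}{2}$ for $y<-\tfrac{1}{2}$ (attained at $x=0$), and $f^{*}(y)=+\infty$ for $y>\tfrac{1}{2}$; compactly, $f^{*}(y)=\max(y,-\tfrac{1}{2})$ on the admissible half-line $y\le\tfrac{1}{2}$. Substituting this into Eq. 9 of Theorem 1 (with divergence budget $\epsilon_D=\epsilon_{TV}$), performing the change of variables $\eta\mapsto\eta/\nu$, and pushing the outer $\nu>0$ inside the maximum via $\nu\max(a,b)=\max(\nu a,\nu b)$ yields the displayed objective. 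The side constraint arises from the domain condition $\eta/\nu+\epsilon-J(\tau)/\nu\le \tfrac{1}{2}$: requiring finiteness of $f^{*}$ at every sampled trajectory forces $\eta\le \nu(\tfrac{1}{2}-\epsilon)+\min J(\tau)$.

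The main obstacle I anticipate is notational bookkeeping around the two distinct meanings of $\epsilon$: in Theorem 1 it is the abstract $f$-divergence budget $\epsilon_D$, whereas in the present statement it is the $l_1$ radius. The identification $\epsilon_D=\epsilon_{TV}=2\Phi(\epsilon/(2\sigma))-1$ is what makes the template apply, and I would be careful to preserve monotonicity (a larger $l_1$ budget giving a larger TV, hence a weaker lower bound) when substituting. A secondary care point is confirming that the rescaling $\eta\mapsto\eta/\nu$ is a genuine bijection on $\{(\nu,\eta):\nu>0\}$, so that the supremum value is unchanged and the outer $\nu$ cleanly cancels against the inner $\nu$ inside $\max(\cdot,-\nu/2)$.
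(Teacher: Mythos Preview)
Your proposal is correct and follows essentially the same route as the paper: cite the closed-form TV distance between isotropic Gaussians (the paper defers this to Theorem~I of \cite{estgu}), then instantiate Theorem~1 with $f(x)=\tfrac{1}{2}|x-1|$, compute $f^{*}(y)=\max(y,-\tfrac{1}{2})$ on $\{y\le\tfrac{1}{2}\}$, and rescale $\eta\mapsto\eta/\nu$ to pull the factor $\nu$ inside the $\max$. Your version is in fact more careful than the paper's stated formula: the standard Gaussian TV is $2\Phi(\|\delta\|_2/(2\sigma))-1$, so the ``$2\sigma^2$'' and the placement of ``$-1$'' inside $\Phi$ in the displayed $\epsilon_{TV}$ appear to be typos, and your flagging of the two distinct $\epsilon$'s (the $l_1$ radius versus the divergence budget $\epsilon_D=\epsilon_{TV}$ that actually enters Eq.~9) is exactly the right bookkeeping.
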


\begin{figure*}
    \centering
    \includegraphics[width=0.4\linewidth]{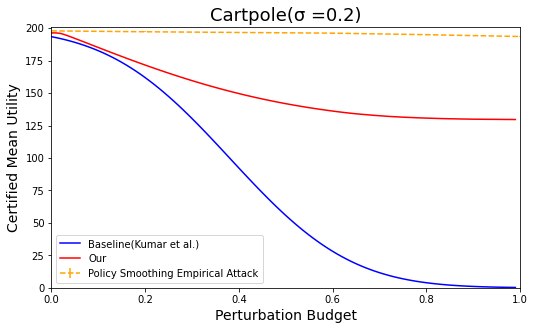}
    \includegraphics[width=0.4\linewidth]{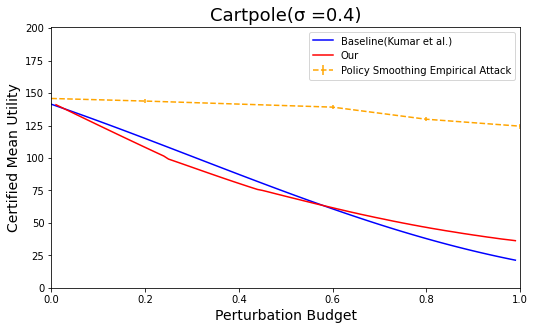}
    \\
    \includegraphics[width=0.4\linewidth]{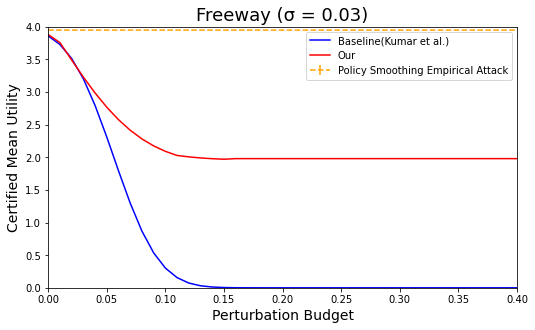}
    \includegraphics[width=0.4\linewidth]{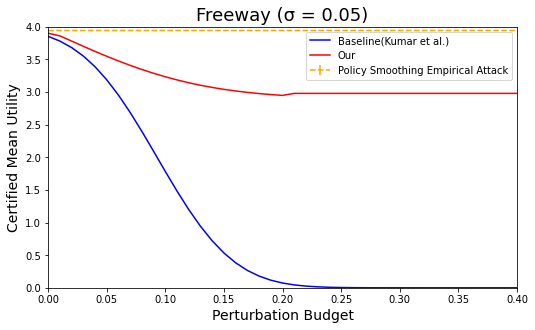}
    \caption{Comparison of the certified lower bound of mean utility under $l_2$-norm perturbation (where higher certified utility is better). For 'Cartpole', the baseline has a runtime of 0.039s per budget, while ours is 0.016s. For 'Freeway', both our method and the baseline exhibit similar running times, both at 0.03 s per budget. The dashed line represents the empirical attack result.}
    \label{fig:enter-label}
\end{figure*}
\subsection{Certification on $l_0$-norm} \label{sec:l0}
Our approach can also be extended to certify perturbations bounded by $l_0$-norm. Within the RL domain, the $l_0$ norm can be employed to quantify the number of actions manipulated when performing attacks in the discrete action space. 
We define a smoothed policy denoted as $\tilde \pi$, which is developed by introducing an element of randomness by not selecting the best action at each step with probability $k$. For the certification, we choose the worst action with a probability of $k$ at each step. The objective is to establish a lower bound for the mean cumulative reward under an $l_0$-norm bound, which specifies the maximum number of allowed actions that can be changed. In this paper, we use the Rényi divergence with parameter $\beta$ to measure the $l_0$ perturbation, and the optimisation objective can be formulated as follows:
\begin{theorem}\label{the:5}

Given a trained smoothed policy $\tilde \pi$ with action space $\mathcal{A}:\{a_1,...,a_N\}$. Suppose at each step the policy has a probability of $k$ to select the best action $a^*$, and a probability of $e$ to select an action different from $a^*$. The total number of time steps is denoted as $T$. The action sequences $\mathcal{A}$ that selected by the smoothed policy can be viewed as on the distribution of $\mu(\mathcal{A})=\prod_{i=1}^{T} (1-e)^{1\left[a_{i}=a^*_{i}\right]}\left(\frac{e}{N-1}\right)^{1\left[a_{i} \neq a^*_{i}\right]}$. The certification objective is:
$\underset{\nu>0, \eta \in \mathbb{R}}{\max} (\nu\eta-\underset{\tau \sim p}{\mathrm{E}}\left[\mathcal{Z}\right])$,
where 
\begin{equation*}
\mathcal{Z}=\left\{
\begin{array}{cc}
\nu+\nu^{\frac{-1}{\beta-1}}(\beta-1)\left(\frac{\max (x, 0)}{ \beta}\right)^{\frac{\beta}{\beta-1}} \quad if \quad \beta > 1\\
-\nu+\nu^{\frac{-1}{\beta-1}}(1-\beta)\left(\frac{x}{ -\beta}\right)^{\frac{\beta}{\beta-1}} \quad if \quad 0\leq\beta < 1, x\leq 0
\end{array}
\right.
\end{equation*}
where $x=\nu\eta+\nu\epsilon-J(\tau)$.
\end{theorem}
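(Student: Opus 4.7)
The plan is to instantiate the general certification template of Theorem~\ref{th:1} with (i) the discrete product distribution $p=\mu(\mathcal{A})$ induced by the smoothed action policy, and (ii) the $f$-divergence that corresponds to R\'enyi divergence of order $\beta$. The appropriate choice is $f_\beta(x)=x^\beta-1$ for $\beta>1$ and $f_\beta(x)=1-x^\beta$ for $0\le\beta<1$; in each regime $f_\beta$ is convex on $x\ge 0$ with $f_\beta(1)=0$, and $D_{f_\beta}(q\|p)=\mathbb{E}_p[(q/p)^\beta]-1$ (resp.\ $1-\mathbb{E}_p[(q/p)^\beta]$) is monotonically related to $D_\beta(q\|p)$. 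Once $\mathcal{D}_\epsilon\subseteq\{q:D_{f_\beta}(q\|p)\le\epsilon\}$ is established for the $l_0$-constrained action perturbation (using the product structure of $p$ to convert a bound on the number of flipped actions into a bound on the $\beta$-moment $\mathbb{E}_p[(q/p)^\beta]$), the theorem follows by substituting the explicit form of $f_\beta^*$ into Eq.~\ref{eq:opt} and carrying the $\nu$-factor inside.

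The central technical step is therefore the computation of the Legendre--Fenchel conjugate $f_\beta^*(y)=\sup_{x\ge 0}\{xy-f_\beta(x)\}$ restricted to the non-negative orthant, since density ratios $x=q/p$ are non-negative. For $\beta>1$, differentiating gives the stationary point $x^\star=(y/\beta)^{1/(\beta-1)}$, which is feasible only when $y\ge 0$; for $y<0$ the supremum is attained at the boundary $x=0$. A short calculation yields
\begin{equation*}
f_\beta^*(y)\;=\;1+(\beta-1)\left(\frac{\max(y,0)}{\beta}\right)^{\beta/(\beta-1)},\qquad \beta>1.
\end{equation*}
For $0<\beta<1$, $f_\beta(x)=1-x^\beta$ has derivative $-\beta x^{\beta-1}\le 0$, so $xy-f_\beta(x)$ is bounded above only when $y\le 0$, and the stationary point $x^\star=(-y/\beta)^{1/(\beta-1)}$ gives
\begin{equation*}
f_\beta^*(y)\;=\;-1+(1-\beta)\left(\frac{-y}{\beta}\right)^{\beta/(\beta-1)},\qquad 0\le\beta<1,\ y\le 0.
\end{equation*}
The appearance of $\max(\cdot,0)$ in the first case, and the implicit domain restriction $y\le 0$ in the second case, come precisely from the one-sided nature of the density-ratio constraint.

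Finally I would substitute $y=\eta+\epsilon-J(\tau)/\nu$ into $\nu f_\beta^*(y)$ and relabel $x=\nu y=\nu\eta+\nu\epsilon-J(\tau)$. The identity $\nu^{-1/(\beta-1)}\cdot\nu^{\beta/(\beta-1)}=\nu$ collapses the prefactor and reproduces $\mathcal{Z}$ in exactly the two cases stated in the theorem, and Theorem~\ref{th:1} rewritten as $\nu\eta-\mathbb{E}_{\tau\sim p}[\nu f_\beta^*(y)]$ becomes the claimed objective $\nu\eta-\mathbb{E}_{\tau\sim p}[\mathcal{Z}]$.

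The main obstacle is twofold. First, handling both regimes of $\beta$ uniformly is delicate: the $\beta>1$ case produces a piecewise-defined conjugate whose ``flat'' part on $y<0$ manifests as the $\max(x,0)$ operator, whereas the $\beta\in[0,1)$ case has an effective domain $y\le 0$, so the two formulas are not analytic continuations of each other and must be derived separately. Second, one must carefully verify that the $\nu$-rescaling of Theorem~\ref{th:1} commutes with the $\max$ and the fractional power---this is where the algebraic identity $\nu\cdot\nu^{-\beta/(\beta-1)}=\nu^{-1/(\beta-1)}$ is used, and any sign mistake in the exponents (especially for $\beta<1$, where $1/(\beta-1)<0$) would propagate into the wrong final formula.
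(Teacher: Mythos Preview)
Your proposal is correct and follows essentially the same route the paper takes: instantiate the general dual objective of Theorem~\ref{th:1} with the R\'enyi-type $f$-divergence $f_\beta(x)=x^\beta-1$ (resp.\ $1-x^\beta$), compute the one-sided Legendre conjugate over $x\ge 0$, and then push the $\nu$-scaling through via $\nu\cdot\nu^{-\beta/(\beta-1)}=\nu^{-1/(\beta-1)}$ to obtain the piecewise form of $\mathcal{Z}$. Your conjugate calculations and the handling of the two regimes of $\beta$ match the paper's derivation; the only step you leave at the level of a sketch---converting the $l_0$ action budget into a bound on $\mathbb{E}_p[(q/p)^\beta]$ via the product structure of $\mu(\mathcal{A})$---is precisely the preparatory lemma the paper relegates to its appendix before invoking Theorem~\ref{th:1}.
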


\section{Experiments}
We first demonstrate the experimental results of our method - {\bf ReCePS}
\footnote{Our code is available at \url{https://github.com/TrustAI/ReCePS}}, compared with the baseline to certify the cumulative reward under $l_2$-norm bounded perturbation. Then, we show the results of certifying the $l_1$-norm perturbation on observations and the $l_0$-norm perturbation on action space.

\textbf{Environments} We followed the baseline \cite{kumar2021policy} to apply our algorithm on three standard environments: two classic control environments (`Cartpole' and 'Mountain Car') and one high-dimensional Atari game (`Freeway'). Due to the page limit, we present the results for `Cartpole' and `Freeway' in the main paper, and extra experiments are provided in Appendix I. 

\textbf{RL algorithm} Among the environments, the `Cartpole' and two Atrari games adapt a discrete action, and `Mountain Car' utilises a continuous action space. We use the Deep-Q-Network (DQN) \cite{mnih2013playing} to train the policy for the discrete action space and we use deep Deterministic Gradient (DDPG) \cite{lillicrap2015continuous} to train the policy based on continuous action space (Details are in Appendix). 

\textbf{Environments setup} In all experiments, we initially select $M=1000$ to determine the optimal value in the optimisation function. Subsequently, we set $\tilde M=10000$ to derive the lower bound for the mean cumulative reward. The confidence level for all experimental outcomes, including baselines, is set at $1-\alpha =0.99$.

\subsection{Certified bound under $l_2$ and $l_1$ perturbations}
In scenarios where the adversary can influence the agent's observations, we adopt the approach from prior research, assuming that the adversary intervenes in each frame only once, right when it is initially observed. For both the training and testing phases, we introduce noise to create a defence policy based on smoothing techniques.

For the certification of $l_2$-norm perturbation, we compare our method with \cite{kumar2021policy}, which is also designed to certify the adaptive RL adversary and follow the same setting to make a fair comparison. \cite{wu2021crop} also proposed an algorithm to certify the cumulative reward, but they focused on certifying the policy at each step against the non-adaptive adversary. Their global reward certification method based on Lipchitz continuity is too weak compared with \cite{kumar2021policy} and our method (Results presented in Appendix I).
\begin{figure}
    \centering
    \includegraphics[width=1\linewidth]{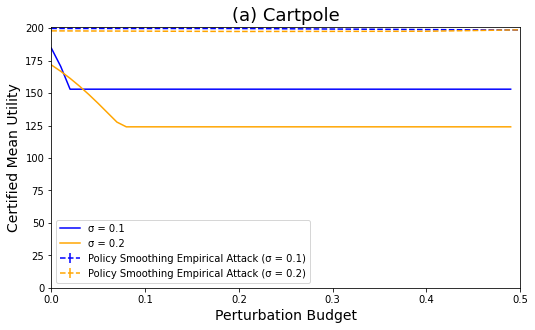}\\
    \includegraphics[width=1\linewidth]{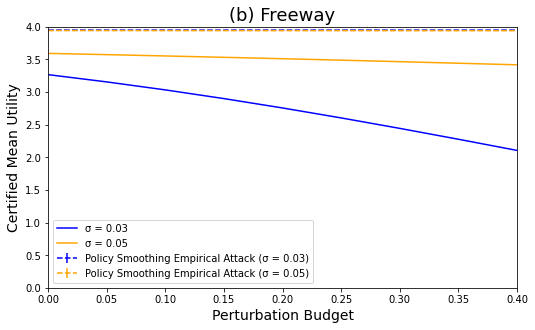}
   \caption{Certified lower bound of mean cumulative reward, under $l_1$-norm bounded perturbation.}
    \label{fig:l1}
    \vspace{-3mm}
\end{figure}
We use the Hocky-Stick divergence to measure the norm distance and tune the parameter $\lambda$ to achieve the best certificate. As shown in Figure \ref{fig:enter-label}, our method can obtain a tighter certified bound when we increase the perturbation budget, especially for the `Freeway' environment. However, we can also observe that when we increase the smoothing parameter $\sigma$, the bound of our method gets closer to the bound obtained by the baseline. This could occur because as the $\sigma$ grows, the variance of the cumulative output reward distribution also increases. Consequently, employing the mean across several simulations to estimate the expectation term might result in inaccuracies when solving the optimisation problem. 

Concerning the certification of the $l_1$ norm perturbation, we present the results in Figure \ref{fig:l1}. These perturbations are structured as the sum of perturbations applied to all states, assuming that the adversary introduces perturbations to state observations throughout the entire episode. We follow the same procedure of \cite{kumar2021policy} to perform the attack. The outcomes reveal a clear trend: elevating the $\sigma$ value enhances policy robustness, but excessive $\sigma$ values can lead to reduced performance (lower reward).
\begin{figure}
    \centering
    \includegraphics[width=1\linewidth]{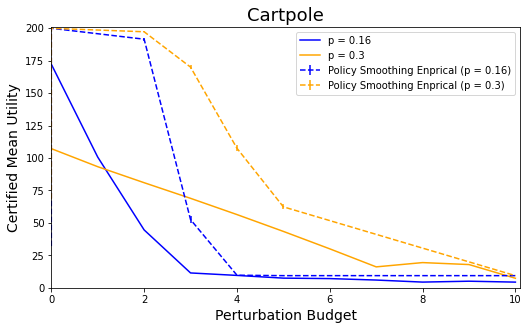}
   \caption{Certified lower bound and empirical attack result of mean cumulative reward, under $l_0$-norm bounded perturbation on action space in `Cartpole' environment with two discrete actions. $p$ is the probability of changing the action.}
    \label{fig:l0}
  \vspace{-3mm}
\end{figure}
\subsection{Certified bound under $l_0$ perturbations}
 For the perturbation on action space, we adapted the attack method in \cite{LinHLSLS17}, which performs an attack when the gap between the value of the best and worst actions is above a threshold. To defend against such attacks, during training and testing time we will enforce the agent to select the worst action with probability $p$. By adopting the Rényi divergence, we can compute the lower bound of cumulative reward by finding the optimal value of the convex optimisation problem in Theorem \ref{the:5}. However, as the problem is non-linear, we use the `SciPy' optimise Python package to solve it, and we tune the parameter of Rényi divergence, $\beta$, to obtain the optimal certificate. We present the results of the experiment in `Cartpole' environment in Figure \ref{fig:l0} and experiments on other environments can be found in Appendix I. It's evident that increasing the smoothing parameter $p$ enhances the policy's robustness against attacks. 

\section{Conclusion}

We introduce a general framework to provide a lower bound of mean cumulative reward certification that can handle perturbations bounded by various norms. Our approach employs the $f$-divergence to quantify the magnitude of adversarial perturbation. By solving a convex optimisation problem, we compute the optimal minimum utility. Our assessments demonstrate that our approach yields tighter bounds compared to existing state-of-the-art techniques, extending its applicability to perturbations within the action space.
\section{Acknowledgement}
The research is supported by the UK EPSRC 
under project EnnCORE [EP/T026995/1].

\bibliography{aaai24}

\end{document}